\newcommand{\etal}{\textit{et al}. }
\newcommand{\ie}{\textit{i}.\textit{e}., }
\newcommand{\eg}{\textit{e}.\textit{g}. }
\newcommand{\etc}{\textit{etc}. }
\newtheorem{theorem}{Theorem}
\title{Redistribution of Weights and Activations for AdderNet Quantization}
\author{
	Ying Nie\textsuperscript{\rm 1}~~Kai Han\textsuperscript{\rm 1,2}~~Haikang Diao\textsuperscript{\rm 3}~~Chuanjian Liu\textsuperscript{\rm 1}~~Enhua Wu\textsuperscript{\rm 2,4}~~Yunhe Wang\textsuperscript{\rm 1}\thanks{Corresponding author} \\
	\textsuperscript{\rm 1}Huawei Noah’s Ark Lab\\
	\textsuperscript{\rm 2}State Key Lab of Computer Science, ISCAS \& UCAS \\
	\textsuperscript{\rm 3}School of Integrated Circuits, Peking University  \textsuperscript{\rm 4}University of Macau \\
	\texttt{\small \{ying.nie, kai.han, yunhe.wang\}@huawei.com} \\
}
\begin{document}
\maketitle

\begin{abstract}
	Adder Neural Network (AdderNet) provides a new way for developing energy-efficient neural networks by replacing the expensive multiplications in convolution with cheaper additions (\ie$\ell_1$-norm). To achieve higher hardware efficiency, it is necessary to further study the low-bit quantization of AdderNet. Due to the limitation that the commutative law in multiplication does not hold in $\ell_1$-norm, the well-established quantization methods on convolutional networks cannot be applied on AdderNets. Thus, the existing AdderNet quantization techniques propose to use only one shared scale to quantize both the weights and activations simultaneously. Admittedly, such an approach can keep the commutative law in the  $\ell_1$-norm quantization process, while the accuracy drop after low-bit quantization cannot be ignored. To this end, we first thoroughly analyze the difference on distributions of weights and activations in AdderNet and then propose a new quantization algorithm by redistributing the weights and the activations. Specifically, the pre-trained full-precision weights in different kernels are clustered into different groups, then the intra-group sharing and inter-group independent scales can be adopted. To further compensate the accuracy drop caused by the distribution difference, we then develop a lossless range clamp scheme for weights and a simple yet effective outliers clamp strategy for activations. Thus, the functionality of full-precision weights and the representation ability of full-precision activations can be fully preserved. The effectiveness of the proposed quantization method for AdderNet is well verified on several benchmarks, \eg, our 4-bit post-training quantized adder ResNet-18 achieves an 66.5\% top-1 accuracy on the ImageNet with comparable energy efficiency,  which is about \textbf{8.5\%} higher than that of the previous AdderNet quantization methods. Code is available at \url{https://gitee.com/mindspore/models/tree/master/research/cv/AdderQuant}.
\end{abstract}

\section{Introduction} \label{sec:intro}

Convolutional neural networks (CNNs) have achieved extraordinary performance on many vision tasks. However, the huge energy consumption of massive multiplications makes it difficult to deploy CNNs on portable devices like mobile phones and embedded devices. As a result, substantial research efforts have been devoted to reducing the energy consumption of CNNs~\cite{deepcompression,howard2017mobilenets,courbariaux2015binaryconnect,choi2021qimera,shomron2021post,xu2019positive,nie2021ghostsr}. 

Beyond pioneering model compression approaches, Chen~\etal~\cite{chen2020addernet} advocate the use of $\ell_1$-norm instead of cross-correlation for similarity measure between weights and activations in conventional convolution operation, thus produces a series of adder neural networks (AdderNets) without massive number of multiplications. Compared with multiplication, addition is more energy efficiency, \eg full-precision (FP) multiplication takes about 4$\times$ energy as full-precision addition~\cite{horowitz20141, you2020shiftaddnet, sze2017efficient, li2021winograd}. AdderNets have shown impressive performance in large scale image classification~\cite{chen2020addernet, xu2020kernel}, object detection~\cite{chen2021empirical}, \etc which shed light into the design of low-power hardware for AI applications. 

The operation of low-bit addition has much lower energy consumption than the full-precision addition~\cite{horowitz20141, you2020shiftaddnet, sze2017efficient, li2021winograd}. To further improve the efficiency of AdderNet, low-bit quantization is a crucial option. Quantization algorithms have been studied in conventional convolution networks for a long time and achieved remarkable performance~\cite{krishnamoorthi2018quantizing, banner2019post, jacob2018quantization, esser2019learned, nagel2020up, li2021brecq, hubara2016binarized, dong2020hawq, xu2021learning, niemulti}. As the  law of commutation holds in the quantization process of multiplication, the scales of weights and activations for quantization can be independent of each other (Eq.~\ref{eq:quant2}). In practice, weights tend to be quantized with multiple scales based on the output channels, while activations are quantized with one scale based on the whole layer (Figure~\ref{fig0} (b)). However, the law of commutation does not hold in the $\ell_1$-norm quantization, which restricts the weights and activations in AdderNet to be quantized with shared scale (Eq.~\ref{eq:quant3}, \ref{eq:quant5}). Therefore, existing AdderNet quantization techniques~\cite{wang2021addernet, chen2021empirical} adopt only one shared scale to quantize both the weights and activations simultaneously. In practice, the shared scale is calculated based on the activations of the whole layer (Top of Figure~\ref{fig0} (c)) or the weights of the whole layer (Bottom of Figure~\ref{fig0} (c)). However, given a pre-trained full-precision AdderNet model, the ranges of weights usually vary widely between output channels, and the ranges of activations and weights also vary widely, both of which pose a huge challenge for quantization. If the range of activations is adopted to quantize weights, a large proportion of weights will be clamped, \ie over clamp. If the range of weights is adopted to quantize activations, a large proportion of precious bits will be never used, \ie bits waste. In both cases, a poor quantized accuracy will be incurred. In contrast, multiple independent scales adopted in the vanilla convolution can avoid this challenge.

\begin{figure}[t]
	\centering
	\includegraphics[width=1.0\linewidth]{./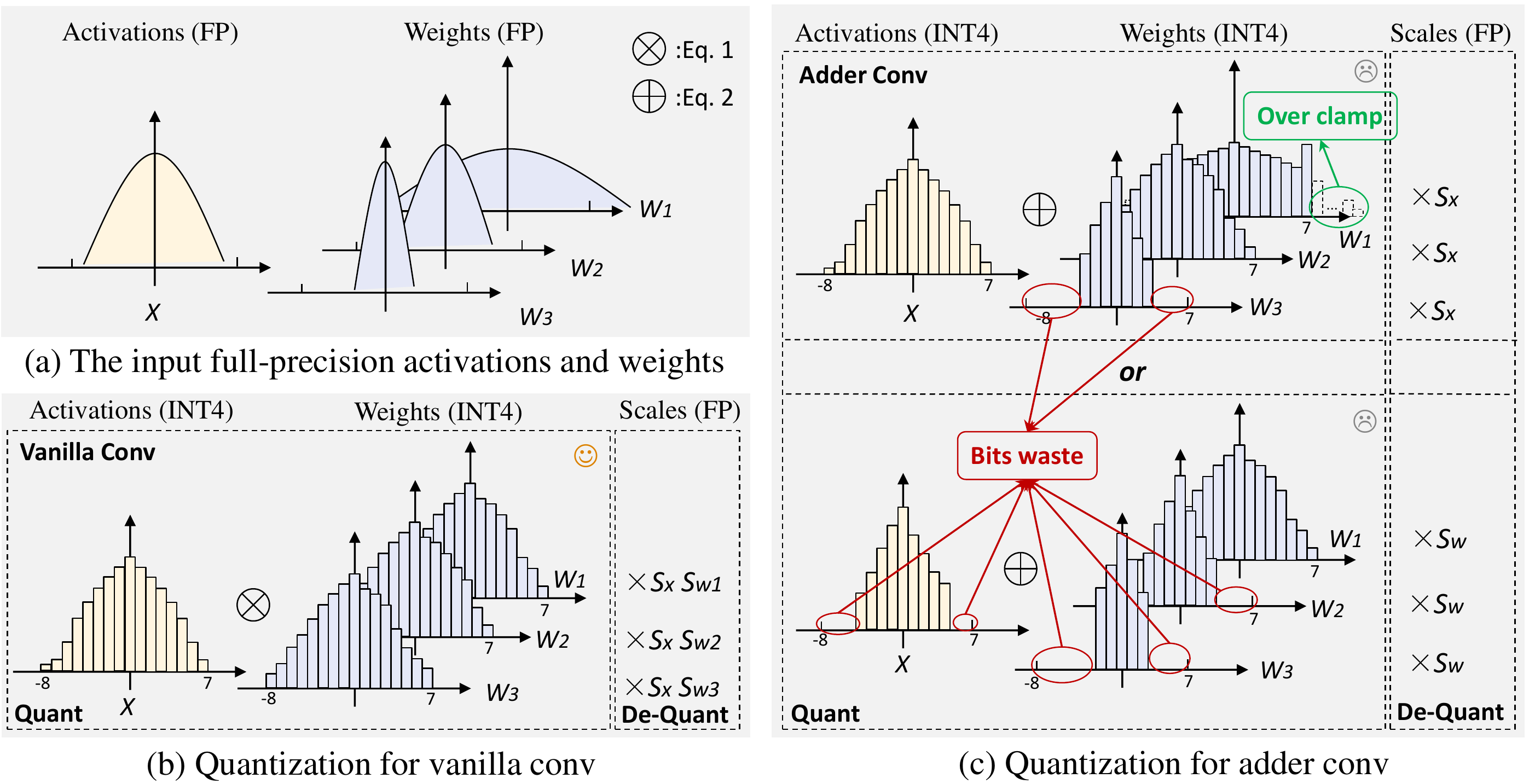}
	\caption{Comparisons of quantization methods of vanilla convolution and adder convolution (symmetric 4-bit as an example). The independent scales adopted by vanilla convolution quantization can properly handle the challenge of large differences in weights and activations. One shared scale adopted in existing adder convolution quantization methods cannot handle this challenge.}
	\label{fig0}
	\vspace{-1.1em}
\end{figure}

To enhance the accuracy of quantized low-bit AdderNets, we first thoroughly analyze the problems of the one shared scale paradigm for quantization in existing AdderNet quantization methods. Then, we develop a novel quantization algorithm for AdderNet. Specifically, we first propose a scheme of group-shared scales with a negligible increase in computational workload to address the challenge that the ranges of weights vary widely between output channels. The pre-trained full-precision weights are divided into multiple groups by a quantization-oriented clustering process. Since the weights within a group are similar, the intra-group sharing and inter-group independent scales are thus used in our method. To further improve the performance of quantized adder neural network, we introduce a lossless range clamp scheme for weights to further reduce the difference on distribution of activations and weights in AdderNet. In practice, for groups where the range of the weights exceeds the range of the activations, we clamp the weights to the range of activations and then incorporate the clamped values into the following  $bias$ term. Next, for those activations that contain outliers, a simple yet effective outliers clamp strategy for activations is presented to eliminate the negative impact of outliers. The extensive experiments conducted on several benchmarks demonstrate the effectiveness of the proposed quantization  method.
 
\section{Preliminaries and Motivation}
In this section, we briefly revisit the necessary fundamental of adder neural network and existing quantization process for conventional CNN and AdderNet, then we describe the motivation.

\subsection{Preliminaries}
\noindent\textbf{Adder Neural Network (AdderNet). } 
Conventional convolution uses massive multiplications for computation, while adder convolution utilize addition to replace multiplication for reducing computational cost. Given the input activations $X\in\mathbb R^{H\times W\times c_{in}}$ and the weights $W\in\mathbb R^{d\times d\times c_{in}\times c_{out}}$, the traditional convolutional operation is defined as:
\begin{equation}\small
	Y(m,n,c) = X \otimes W = \sum_{i=1}^{d}\sum_{j=1}^{d}\sum_{k=1}^{c_{in}}X(m+i,n+j,k)\times W(i,j,k,c),
	\label{eq:cnn}
\end{equation}
where $H$ and $W$ are the height and width of activations, $c$ is the index of output channels, $d\times d$ denotes the kernel size, $c_{in}$ and $c_{out}$ are the number of input and output channels, respectively.

To avoid massive floating number multiplications, Chen~\etal~\cite{chen2020addernet} advocate the use of $\ell_1$-norm instead of cross-correlation for similarity measure between activations and weights in CNN:
\begin{equation}\small
	Y(m,n,c)  = X \oplus W = -\sum_{i=1}^{d}\sum_{j=1}^{d}\sum_{k=1}^{c_{in}} |X(m+i,n+j,k) - W(i,j,k,c)|,
	\label{eq:adder}
\end{equation}
where $|\cdot|$ is the absolute value function. Since Eq.~\ref{eq:adder} only contains addition operation, the energy costs of the adder neural network can be effectively reduced~\cite{horowitz20141, sze2017efficient, you2020shiftaddnet}. With modified back-propagation approach and adaptive learning rate strategy, AdderNet achieves satisfactory performance on image classification~\cite{chen2020addernet, xu2020kernel, wang2021addernet}, object detection~\cite{chen2021empirical}, \etc

\noindent\textbf{Quantization for CNN. }
The common uniform quantization for CNN linearly maps full-precision real numbers into low-bit integer representations, which is preferred by hardware efficiency. We take the symmetric mode without zero point as an example to describe the procedure of quantization. Given a full-precision input value $v$ and bit-width $b$, the quantized value $\bar{v}$ can be defined as:
\begin{equation}
	\bar{v} = \textit{clamp}(\lfloor\sfrac{v}{s}\rceil, q_n, q_p),
	\label{eq:quant0}
\end{equation}
where the scale $s=2*max(|v|)/(2^b-1)$, $q_n=-2^{b-1}$ and $q_p=2^{b-1}-1$. $\lfloor\cdot\rceil$ denotes the round function and $clamp(z, r_{1}, r_{2})$ indicates that value $z$ is clamped between $r_{1}$ and $r_{2}$. Correspondingly, the de-quantized data can be computed by:
\begin{equation}
	\hat{v} = \bar{v} \times s.
	\label{eq:quant1}
\end{equation}
And the quantization loss can be defined as:
\begin{equation}
L_{quant}= |\hat{v} - v|.
\label{eq:quant_loss}
\end{equation}
Since the commutative law holds in multiplication, the multiplication operation of weights and activations in the conventional convolution can be converted as follows:
\begin{equation}
	X \otimes W \approx \hat{X} \otimes \hat{W} = (s_{x}\bar{X}) \otimes (s_{w}\bar{W}) =  (s_{x} s_{w})\times(\bar{X} \otimes \bar{W}),
	\label{eq:quant2}
\end{equation}
where $\otimes$ denotes the vanilla convolution operation in Eq.~\ref{eq:cnn}. In common settings~\cite{li2021brecq,jacob2018quantization, krishnamoorthi2018quantizing, banner2019post,choi2018pact}, the weights adopt independent $s_{w}$ for each output channel, and all activations in a layer share one $s_{x}$. From Eq.~\ref{eq:quant2}, the data format of vanilla convolution operation is changed from full-precision to low-bit integer, which can reduce the energy consumption effectively.

\noindent\textbf{Quantization for AdderNet. }
Due to the fact that the commutative law does not hold in adder convolution, the independent scales $s_{w}$ and $s_{x}$ cannot be adopted when quantifying weights and activations in adder convolution as we do in conventional convolution, that is:
\begin{equation}
X \oplus W \approx \hat{X} \oplus \hat{W} = (s_{x}\bar{X}) \oplus (s_{w}\bar{W}) \neq (s_{w} s_{x}) \times (\bar{X} \oplus \bar{W}),
\label{eq:quant3}
\end{equation}
where $\oplus$ denotes the adder convolution operation in Eq.~\ref{eq:adder}. Therefore, the existing works on AdderNet quantization~\cite{wang2021addernet, chen2021empirical} all adopt one shared scale $s$ for quantifying weights and activations. Thus, the quantized and de-quantized weights are defined as:
\begin{equation}
\begin{aligned}
\bar{W} = \textit{clamp}( \lfloor\sfrac{W}{s}\rceil, q_{wn}, q_{wp}), \quad \hat{W} = s\bar{W},
\label{eq:quantw}
\end{aligned}
\end{equation}
similarly, the quantized and de-quantized activations are defined as:
\begin{equation}
\begin{aligned}
\bar{X} = \textit{clamp}( \lfloor\sfrac{X}{s}\rceil, q_{xn}, q_{xp}), \quad \hat{X} = s\bar{X},
\label{eq:quant4}
\end{aligned}
\end{equation}
where $s=2 * max(|X|)/(2^b-1)$ or  $s=2 * max(|W|)/(2^b-1)$. Therefore, the quantized adder convolution can be implemented:
\begin{equation}
X \oplus W = \hat{X} \oplus \hat{W} = (s\bar{X}) \oplus (s\bar{W}) = s \times (\bar{X} \oplus \bar{W}).
\label{eq:quant5}
\end{equation}

\subsection{Motivation} \label{motivation}
\begin{wrapfigure}{r}{5cm}
	\centering
	\includegraphics[width=1.0\linewidth]{./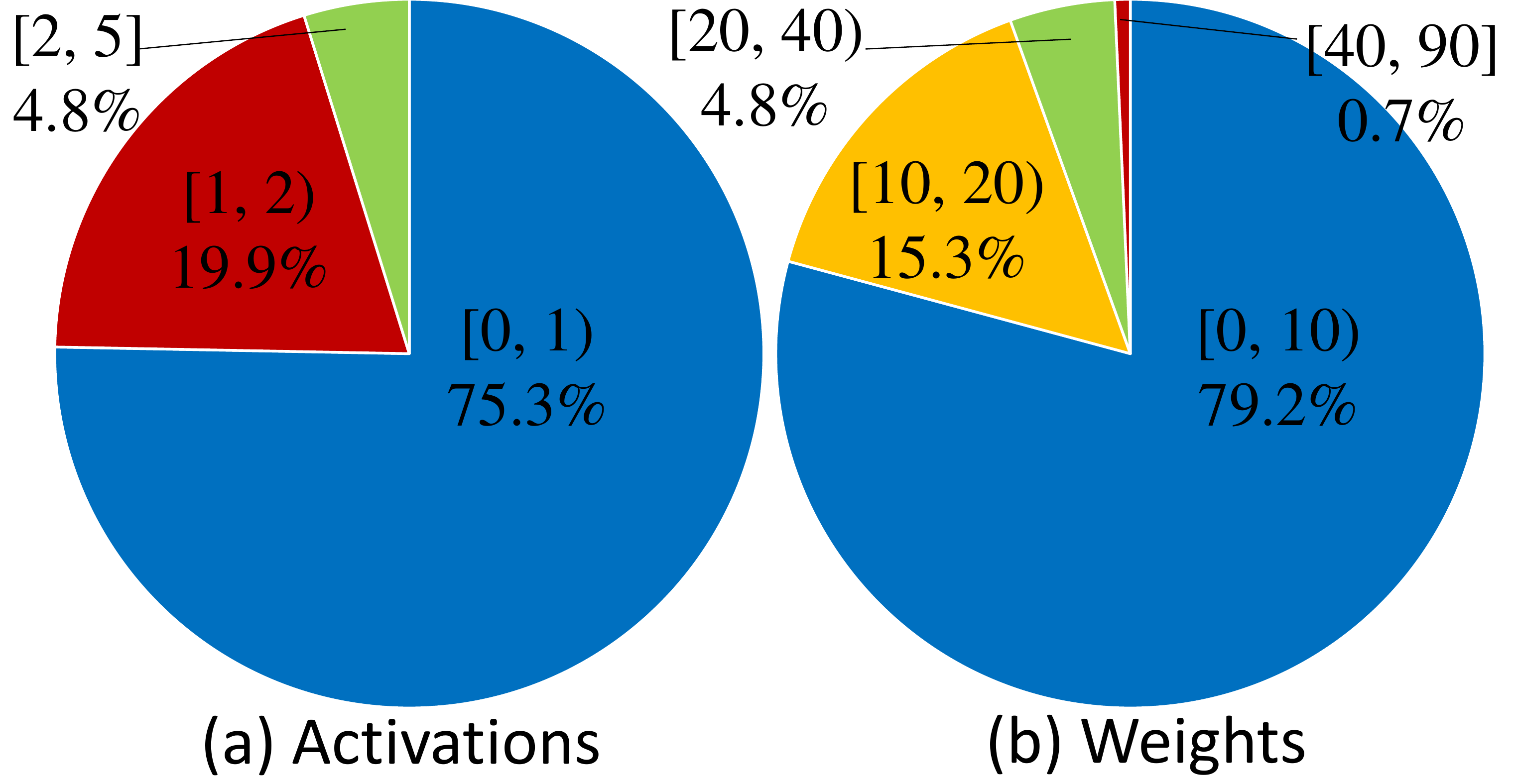}
	\caption{The statistics of the absolute ranges of activations and weights in the pre-trained full-precision AdderNet.}
	\label{fig:chart}
	\vspace{-1em}
\end{wrapfigure}
Empirically, the values of weights and activations tend to vary widely in pre-trained full-precision AdderNet. We take the \textit{layer1.1.conv2} in pre-trained adder ResNet-20~\cite{he2016deep} as an example to visualize the statistics of the absolute ranges of activations and weights in Figure~\ref{fig:chart}. Obviously, the absolute ranges of weights vary widely between output channels, and the absolute ranges of activations and weights also vary widely, both of which pose a huge challenge for the quantization with one shared scale. After thorough analysis, we conclude that when weights and activations in AdderNet are quantized with one shared scale, no matter whether the scale is calculated based on weights or activations, a large quantization loss (Eq.~\ref{eq:quant_loss}) will be incurred, resulting in a poor accuracy, especially in the case of low bits. For example, the Top-1 accuracy loss of 5-bit quantized adder ResNet-18 on the ImageNet is 3.3\%, while the accuracy loss of 4-bit quantized adder ResNet-18 increases to 9.9\%~\cite{wang2021addernet}.
\newtheorem{prop}{Proposition}
\begin{prop}
	Denote $r_x$ and $r_w$ as the ranges of activations and weights in AdderNet, respectively. If $r_x$ is adopted to quantize weights, a large proportion of weights will be clamped. If $r_w$ is adopted to quantize activations, a large proportion of precious bits will be never used. In both cases, large quantization loss is incurred, resulting in a poor accuracy.
\end{prop}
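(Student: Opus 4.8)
The plan is to decompose the quantization loss $L_{quant}$ of Eq.~\ref{eq:quant_loss} into a rounding part and a clamping part, and to track each part under the two mismatched choices of the shared scale. Write $s_x = 2r_x/(2^b-1)$ for the scale induced by the activation range and $s_w = 2r_w/(2^b-1)$ for the one induced by the weight range; without loss of generality, and matching the statistics in Figure~\ref{fig:chart}, I assume $r_w > r_x$, the opposite case being identical after swapping the roles of $X$ and $W$. The only tool needed is the elementary two-regime estimate for a symmetric $b$-bit quantizer with step $s$: the dequantized value $\hat v = s\lfloor v/s\rceil$ obeys $|\hat v - v| \le s/2$ when $|v| \le s\,q_{p}$ (pure rounding), and $|\hat v - v| \ge |v| - s\,q_{p}$ when $|v| > s\,q_{p}$ (clamping).

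For the first claim I apply this with $s = s_x$ to the weights. Since $s_x q_{wp} \approx r_x$, every weight with $|W| \gtrsim r_x$ is clamped, so the clamped set is essentially $\{W : |W| > r_x\}$. It is nonempty because $\max|W| = r_w > r_x$, and the \emph{proportion} of clamped weights is $\Pr[\,|W| > r_x\,]$ under the empirical weight distribution aggregated over all output channels — which Figure~\ref{fig:chart} shows is substantial, precisely because many channels have per-channel range exceeding $r_x$. Summing the clamping estimate yields $L_{quant} \ge \sum_{|W| > r_x}(|W| - r_x)$, a quantity that grows with the gap $r_w - r_x$ and hence cannot be neglected.

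For the second claim I apply the estimate with $s = s_w$ to the activations. Because $|X| \le r_x$ for every activation, the integer codes $\bar X = \lfloor X/s_w\rceil$ all lie in $[-\lceil r_x/s_w\rceil,\, \lceil r_x/s_w\rceil]$, so at most $2r_x/s_w + 1 = (2^b-1)\,r_x/r_w + 1 \approx 2^b\,r_x/r_w$ of the $2^b$ available codewords are ever produced; equivalently the effective resolution is only $b - \log_2(r_w/r_x)$ bits, i.e.\ $\log_2(r_w/r_x)$ bits are wasted, which for the channels in Figure~\ref{fig:chart} is one or more bits. In loss terms the per-activation rounding error can be as large as $s_w/2 = r_w/(2^b-1)$, a factor $r_w/r_x$ above the $s_x/2$ attainable with a matched scale, so $L_{quant}$ for the activations is inflated by that same factor. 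To pass from per-element loss to degraded accuracy I invoke the reverse triangle inequality $\big|\,|X-W| - |\hat X-\hat W|\,\big| \le |X-\hat X| + |W-\hat W|$, which shows the adder output $Y = -\sum_{i,j,k}|X-W|$ of Eq.~\ref{eq:adder} is $1$-Lipschitz in each argument; the accumulated element-wise errors therefore propagate through successive layers, consistent with the accuracy collapse reported for low bit-widths.

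The main obstacle is that ``large proportion'' and ``poor accuracy'' are not distribution-free quantities: the genuinely rigorous content is the characterization of the clamped set $\{|W| > r_x\}$ and the exact wasted-bit count $\log_2(r_w/r_x)$, whereas the assertion that these are \emph{large} is underwritten by the empirical range ratio $r_w/r_x \gg 1$ of Figure~\ref{fig:chart} and by the quoted accuracy drops, not by a universal bound. I would therefore present the proposition's conclusion as the instantiation of the above deterministic bounds at the empirically observed range ratios, rather than attempt a worst-case statement that holds for every weight/activation distribution.
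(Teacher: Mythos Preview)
Your proposal is correct, but it takes a genuinely different route from the paper. The paper's ``proof'' is not a general argument at all: it simply fixes a single illustrative ratio ($W = 50\,r_x$ in the first case, $X = 0.02\,r_w$ in the second), plugs into Eq.~\ref{eq:quantw} and Eq.~\ref{eq:quant4}, and reads off that roughly $98\%$ of the weight magnitude is clamped, respectively that roughly $98\%$ of the $2^{b-1}$ levels go unused. There is no decomposition of $L_{quant}$, no Lipschitz step, and no attempt to parametrize the loss in the ratio $r_w/r_x$; the numbers $50$ and $0.02$ are chosen to echo the empirical gap seen in Figure~\ref{fig:chart} and that is the entire content.

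What your approach buys is an actual quantitative statement: the clamped mass is exactly $\Pr[\,|W|>r_x\,]$, the wasted resolution is exactly $\log_2(r_w/r_x)$ bits, and the per-element activation error is inflated by the factor $r_w/r_x$ relative to a matched scale. These are the general bounds of which the paper's worked example is the instance $r_w/r_x = 50$. Your closing caveat---that ``large'' only follows once the empirical ratio is substituted---is precisely right and is, in effect, what the paper does implicitly by picking that ratio. The reverse-triangle-inequality / Lipschitz step you add to connect element-wise loss to output error is extra rigor the paper does not attempt; it is sound but, given how informal the proposition is, arguably more than is required.
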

\begin{proof}
	For the first case where $r_x$ is adopted to quantize weights, according to Eq.~\ref{eq:quantw}, the detailed quantization of weights can be formulated as:
	\begin{equation}
	\bar{W} = clamp( \lfloor\sfrac{W(2^b-1)}{2r_x}\rceil, -2^{b-1}, 2^{b-1}-1) .
	\end{equation}
	Without losing generality, suppose $W=50r_{x}$, then $\lfloor\sfrac{W(2^{b}-1)}{2r_x}\rceil=25(2^{b}-1)$, if $25(2^{b}-1)$ is clamped between $-2^{b-1}$ and $2^{b-1}$, then the proportion of the clamped value is $(25*(2^b-1)-2^{b-1})/(25*(2^b-1))$ = $(49*2^{b-1}-25)/(50*2^{b-1}-25)\approx 0.98$. Besides, the quantization loss is $|\hat{W}-W|\approx|r_x-50r_x|=49r_x$, which will result in a significant accuracy degradation.
	
	For the second case where $r_w$ is adopted to quantize activations, according to Eq.~\ref{eq:quant4}, the detailed quantization of activations can be formulated as:
	\begin{equation}
	\bar{X} = clamp(\lfloor\sfrac{X(2^{b}-1)}{2r_w}\rceil, -2^{b-1}, 2^{b-1}-1).
	\end{equation}
	Similarly, suppose $X=0.02r_{w}$, then $\lfloor\sfrac{X(2^{b}-1)}{2r_w}\rceil=\lfloor0.01(2^{b}-1)\rceil$, if we clamp $\lfloor0.01(2^{b}-1)\rceil$ between $-2^{b-1}$ and $2^{b-1}$, the remaining $2^{b-1}-\lfloor0.01(2^{b}-1)\rceil\approx0.98*2^{b-1}$ bits will be never used, which is a huge waste for precious bits. 
\end{proof}

\section{Approach}
\begin{figure}[t]
	\centering
	\includegraphics[width=1.0\linewidth]{./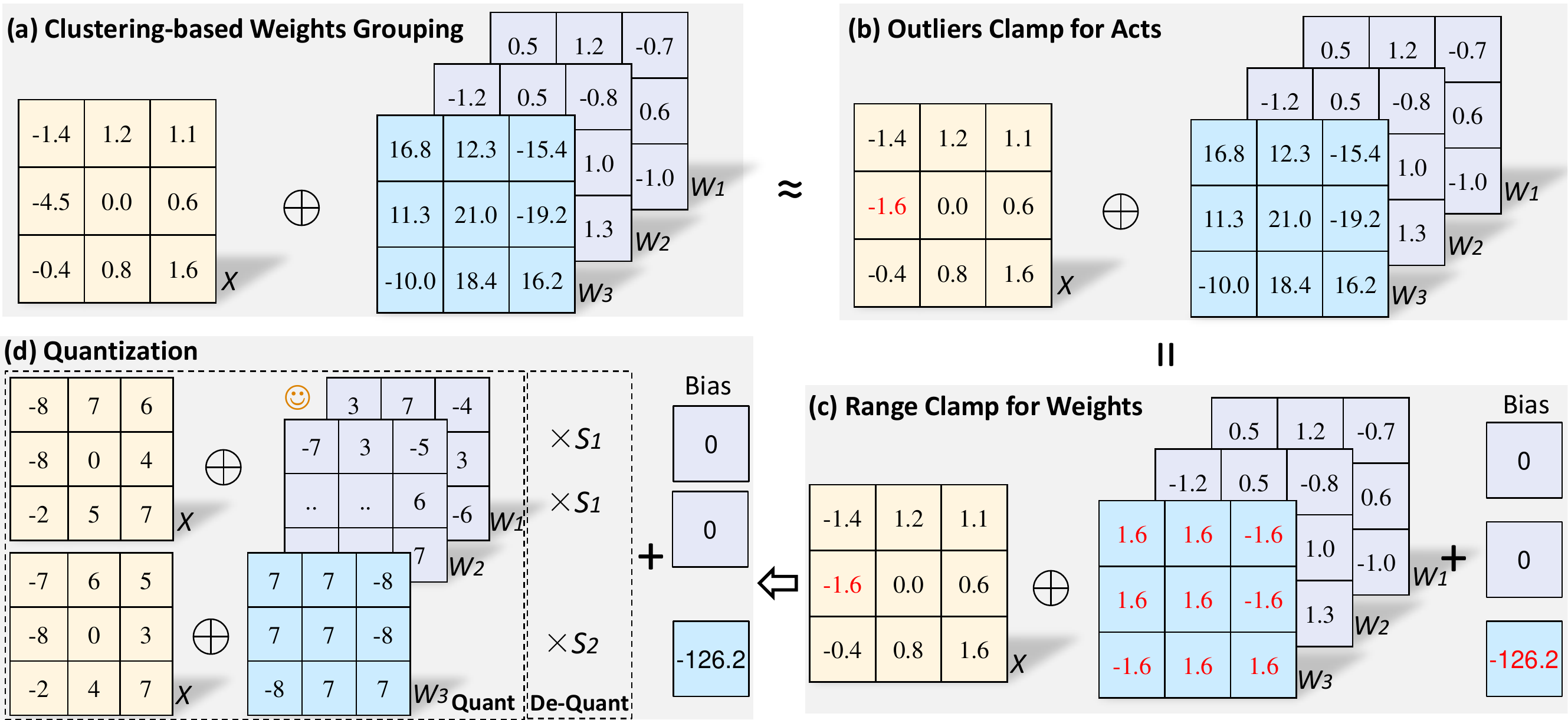}
	\caption{An illustration of the proposed quantization method for AdderNet (symmetric 4-bit as an example). The pre-trained full-precision weights are clustered into different groups. Then the clamp scheme for weights and activations are explored respectively to make efficient use of the precious bits and eliminate the negative impact of outliers.}
	\label{fig1}
\end{figure}
\label{sec:quan}
In this section, we will describe the proposed quantization algorithm for AdderNet in detail. Firstly, we introduce the method of group-shared scales based on clustering. Then, the clamp for weights and activations are proposed, respectively.

\subsection{Clustering-based Weights Grouping} \label{method_cluster}
To address the challenge that the ranges of weights in pre-trained full-precision AdderNet vary widely between output channels, we propose a novel quantization method  with group-shared scales as illustrated in Figure~\ref{fig1} (a). Formally, 
\begin{equation}
\begin{aligned}
X \oplus W &= \textit{concat}(o_1, ... , o_{g}),\\
\text{where}\quad o_j &= s_{j} \times (\bar{X} \oplus \bar{W}[:,:,:,\mathbb{I}_{j}]), 
\end{aligned}
\label{eq:quant6}
\end{equation}
where $j=1, ..., g$ is the index of groups, $g$ is the number of pre-defined groups. $\mathbb{I}_j$ indicates a set of indexes of weights belonging to the $j$-th group, and $s_j$ indicates the $j$-th scale. The set $\mathbb{I}$ is achieved by clustering the weights in the pre-trained full-precision model. Since the scale $s=2 * max(|v|)/(2^b-1)$, which indicates that $s$ and the maximum of $|v|$ are positively correlated. Thus, we cluster $v$ based on the maximum of $|v|$, not all the elements of $v$. In practice, the absolute maximum in each output channels of weights are taken as the clustering feature. Formally, the generated feature vector of weights is denoted as $[f_{1},...,f_{c_{out}}]$:
\begin{equation}
f_c = \max(|W[:,:,:,c]|),
\end{equation}
where $c=1, ..., c_{out}$ is the index of output channels. Next, the feature vector is divided into pre-defined $g$ clusters $\mathbb{I} =\{\mathbb{I}_{1}, \mathbb{I}_{2},...,\mathbb{I}_{g}\}$. Any pair of points within each cluster should be as close to each other as possible:
\begin{equation}
\label{eq:cluster}
\min \limits_{\mathbb{I}} \sum_{j=1}^{g}\sum_{c\in \mathbb{I}_{j}}|f_{c}-\mu_{j}|^{2},
\end{equation}
where $\mu_{j}$ is the mean of points in cluster $\mathbb{I}_{j}$. The method like $k$-means~\cite{hartigan1979algorithm} can be adopted for clustering weights with the objective function in Eq.~\ref{eq:cluster}. Finally, the intra-group sharing and inter-group independent scales can be adopted for quantization. In practice, for groups where the range of the weights is within the range of the activations, the difference between the range of the weights and the activations is usually small, so the scale calculated based on the range of weights can be adopted for quantifying both the weights and activations. Conversely, for groups where the range of weights exceeds the range of activations, the range of weights and activations tends to differ significantly, making it difficult to quantize both weights and activations with one shared scale, which will be addressed in the next subsection. 

\noindent\textbf{Analysis on Complexities. } Compared to quantifying weights and activations with only one scale, the proposed quantization scheme with group-shared scales can improve the performance of quantized AdderNet, but it will result in a small increase in Floating Point Operations (FLOPs). After thorough analysis, we conclude that the increased FLOPs is negligible.
\begin{prop}
	Without losing generality, suppose $c_{out}=c_{in}=c$, then the additional FLOPs ratio introduced by the group-shared scales is approximately equal to $1/(6c+1)$, which is negligible.
\end{prop}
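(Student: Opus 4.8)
The plan is to compare, layer by layer, the FLOPs of the core $\ell_1$ (adder) convolution against the extra operations that the group-shared-scale scheme of Eq.~\ref{eq:quant6} introduces on top of it, and to show that the latter is smaller by one factor of $c$ and one factor of $d^2$; specializing to $c_{in}=c_{out}=c$ and the usual $d=3$ kernel then collapses the ratio to $\approx 1/(6c+1)$.

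First I would count the dominant term. By Eq.~\ref{eq:adder}, each of the $HWc_{out}$ output entries of $X\oplus W$ is a sum of $d^2 c_{in}$ terms, and forming one term costs a subtraction, an absolute value, and an accumulation; hence the adder convolution of one layer costs on the order of $3 d^2 c_{in} c_{out} HW$ FLOPs, the precise constant depending only on how one charges the subtract/abs/accumulate triple and on the lower-order $-1$ from the accumulation. With $c_{in}=c_{out}=c$ and $d=3$ this is $\Theta(c^2 HW)$, the denominator's leading term.

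Next I would count the overhead that Eq.~\ref{eq:quant6} adds over the bare integer convolution $\bar X\oplus\bar W$ (which, whether or not it is split across the $g$ groups, has exactly the same cost, and whose weights are quantized offline in the post-training setting). The genuinely new online work is (a) the $g$ scalar rescalings $o_j=s_j(\cdot)$ and (b) producing the quantized activations at each group scale; both touch only $\Theta(HWc)$ elements with a small constant number of FLOPs each, carrying no $d^2$ factor and one power of $c$ fewer than the convolution. Dividing this $\Theta(HWc)$ overhead by the $\Theta(c^2 HW)$ of the convolution, the $HW$ cancels and only $\Theta(1/c)$ survives; pushing the concrete constants through ($d=3$, fixed charges for the adder triple and for a quantize/rescale step, and discarding the additive lower-order terms) yields the claimed $\approx 1/(6c+1)$, where the ``$+1$'' is precisely the overhead term retained in the denominator.

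The only place care is needed — hence the main obstacle — is the bookkeeping of the constants: one must be explicit about (i) how many FLOPs an $\ell_1$ adder operation is worth (subtract, abs, accumulate), (ii) how many a single quantization or dequantization step is worth, and (iii) that the number of groups $g$ is $O(1)$ and that the extra activation re-quantization, the $\pm1$ in the accumulation, and the clamp comparisons are all genuinely lower order, so that ``approximately equal'' is warranted. None of this is deep; it is just a matter of fixing the accounting convention so that the ``$6$'' and the ``$+1$'' come out exactly as stated.
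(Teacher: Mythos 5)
Your overall strategy is the right one in spirit (the group-scale overhead touches only $\Theta(hwc)$ values while the adder convolution costs $\Theta(d^2c^2hw)$, so the ratio is $\Theta(1/c)$), and you correctly identify the key driver that the paper also relies on: the activations must be quantized once per group, which is the only online cost that grows with $g$ and which carries no $d^2$ factor. However, as set up your accounting cannot land on the stated value $1/(6c+1)$, and that value is the actual content of the proposition. First, your baseline is the bare (integer) adder convolution, so your numerator lumps in \emph{all} scale-related work, including the output rescaling and the first activation-quantization pass. The proposition (see the sentence preceding it) measures the increase relative to the \emph{single-scale} quantized pipeline, so the numerator is only the $(g-1)$ extra activation-quantization passes, $(g-1)hwc_{in}$, while the denominator is the full $g=1$ pipeline cost (conv plus weight quantization, one activation quantization, and output de-quantization). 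Second, the constant depends on the accounting convention and on $g$: the paper charges the quantized adder convolution $2hw(c_{in}d^2+1)c_{out}$ FLOPs ($\approx 18k^2c^2$ at $d=3$, $c_{in}=c_{out}=c$, $h=w=k$), obtaining $r\approx (g-1)/(18c+4)$, and then fixes $g=4$ so that $r\approx 3/(18c+4)\approx 1/(6c+1)$. You charge $3d^2c^2hw=27k^2c^2$ for the convolution and never fix $g$; with those choices your ratio comes out near $5/(27c)$ (or $(g-1)/(27c)=1/(9c)$ if you corrected the numerator), not $1/(6c+1)$, and your interpretation of the ``$+1$'' in the denominator does not correspond to how it actually arises (it comes from the lower-order $k^2c$ and $c^2$ terms of the $g=1$ pipeline after dividing by $g-1=3$).

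So the qualitative ``negligible, $\Theta(1/c)$'' conclusion is sound under your argument, but the quantitative claim is deferred to ``pushing the constants through,'' and under the conventions you state that step would fail. To repair it, switch the comparison to (total FLOPs with $g$ groups) versus (total FLOPs with one scale), keep only the $g$-dependent term $ghwc_{in}$ in the difference, adopt the $2hw(c_{in}d^2+1)c_{out}$ charge for the convolution, and substitute $g=4$, $d=3$, $c_{in}=c_{out}=c$.
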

The detailed proof is provided in the supplementary material under the assumption that $g=4$, considering that the magnitude of $c$ is generally in the tens or hundreds for common networks, thus the additional FLOPs ratio is very small.

\subsection{Clamp for Weights and Activations} \label{method_clamp}
\begin{figure}[t]
	\centering
	\includegraphics[width=1.0\linewidth]{./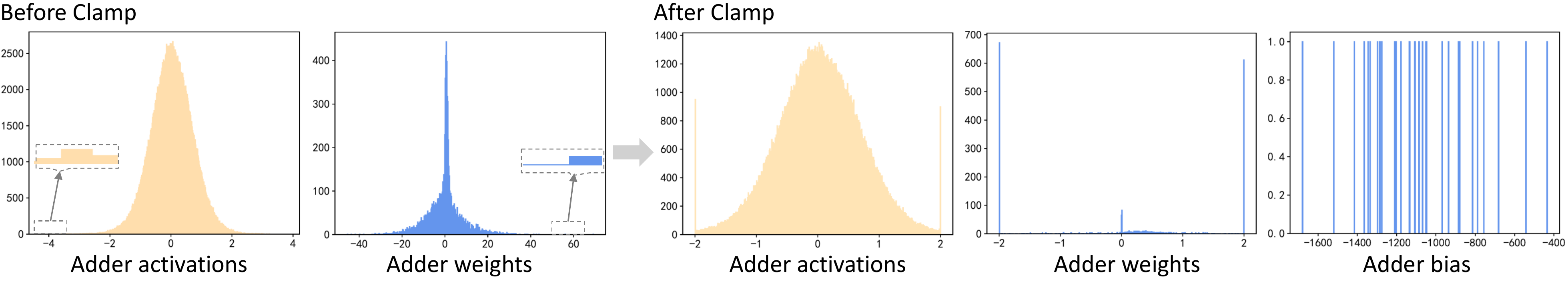}
	\caption{The distributions of activations and weights in pre-trained full-precision AdderNet.}
	\label{fig:distribution}
	\vspace{-1.0em}
\end{figure}

To address another challenge that the ranges of activations and weights in pre-trained full-precision AdderNet vary widely, more specifically, the range of weights far exceeds the range of activations (Left of Figure~\ref{fig:distribution}), we present a lossless range clamp scheme for weights (Figure~\ref{fig1} (c)). Besides, a simple yet effective outliers clamp strategy for activations is also introduced to eliminate the negative impact of outliers (Figure~\ref{fig1} (b)). 

\noindent\textbf{Range Clamp for Weights. }
A lossless range clamp scheme for weights is first proposed. Formally, denote $W_c$ as any one output channel weights, \ie $W_c=W[:,:,:,c], c\in[1,...,c_{out}]$, $r_x$ and $r_w$ as the ranges of activations and weights, respectively. In practice, we clamp the full-precision values of $W_c$ to the range of $[-r_x,r_x]$, then add the clamped values to the following \textit{bias} $b$.

\begin{theorem}
	The proposed range clamp scheme for weights has no any effect on the accuracy, \ie lossless. Formally,
	\begin{equation}\small
	\begin{aligned}
	X\oplus W_c &= X\oplus \textit{clamp}(W_c, -r_x,r_x) + b,\\
	\text{where}~ b=&-\sum_{i=1}^{d}\sum_{j=1}^{d}\sum_{k=1}^{c_{in}} \max(|W_c[i,j,k]|-r_x,0).
	\end{aligned}
	\end{equation}
\end{theorem}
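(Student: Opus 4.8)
The plan is to reduce the claimed identity to a single scalar identity applied termwise inside the $\ell_1$-norm sum. By Eq.~\ref{eq:adder}, $X\oplus W_c$ is a sum of terms of the form $-|x-w|$, where $x$ ranges over the entries of $X$ and $w$ over the corresponding entries of $W_c$; since $r_x$ is the range of the activations, every such $x$ satisfies $|x|\le r_x$. Hence it suffices to prove that for every real $w$ and every $x$ with $|x|\le r_x$,
\begin{equation*}
|x-w| = |x-\textit{clamp}(w,-r_x,r_x)| + \max(|w|-r_x,0).
\end{equation*}
Summing this over all $(i,j,k)$ and multiplying by $-1$ immediately gives $X\oplus W_c = X\oplus\textit{clamp}(W_c,-r_x,r_x) + b$ with $b=-\sum_{i,j,k}\max(|W_c[i,j,k]|-r_x,0)$, which is independent of $X$ and of the output location $(m,n)$ and can therefore be absorbed once and for all into the following $bias$ term.

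First I would dispose of the trivial case $|w|\le r_x$: here $\textit{clamp}(w,-r_x,r_x)=w$ and $\max(|w|-r_x,0)=0$, so the scalar identity holds exactly. The substantive cases are $w>r_x$ and $w<-r_x$. If $w>r_x$, then $\textit{clamp}(w,-r_x,r_x)=r_x$, and since $x\le r_x<w$ we have $|x-w|=w-x$ and $|x-r_x|=r_x-x$; subtracting yields $w-r_x=|w|-r_x=\max(|w|-r_x,0)$, as required. The case $w<-r_x$ is symmetric: $\textit{clamp}(w,-r_x,r_x)=-r_x$, and from $x\ge -r_x>w$ we get $|x-w|=x-w$ and $|x-(-r_x)|=x+r_x$, so the difference is $-w-r_x=|w|-r_x$. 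This exhausts all cases.

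There is essentially no hard part; the only point requiring care is making the scalar case analysis airtight, in particular observing that the hypothesis $|x|\le r_x$ pins down the sign of $x-w$ whenever $w$ lies outside $[-r_x,r_x]$, which is exactly what lets the absolute values collapse to linear expressions. It is also worth noting that the argument uses only $|x|\le r_x$ and not the precise value of $r_x$, so the clamp-plus-bias trick would remain lossless for any clamp radius at least as large as the activation range; taking the radius to be $r_x$ is the tightest such choice and is what makes the clamped weight range coincide with the activation range, enabling a single shared scale in the subsequent quantization of that group.
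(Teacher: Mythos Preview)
Your proof is correct and follows essentially the same route as the paper's: both arguments expand $X\oplus W_c$ termwise and perform the same three-case analysis on each weight entry according to whether $W_c(i,j,k)$ lies above $r_x$, within $[-r_x,r_x]$, or below $-r_x$. Your presentation is slightly cleaner in that you isolate the scalar identity first and make explicit the role of the hypothesis $|x|\le r_x$ in fixing the sign of $x-w$ (the paper uses this implicitly when it writes, \eg, $f=(r_x-X)+(W_c-r_x)$ without justifying the removal of the absolute value), but the mathematical content is identical.
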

\begin{proof}
	\begin{footnotesize}
		\begin{equation}
		X\oplus W_c=-\sum_{i=1}^{d}\sum_{j=1}^{d}\sum_{k=1}^{c_{in}} |X(m+i,n+j,k) - W_{c}(i,j,k)|\triangleq-\sum_{i=1}^{d}\sum_{j=1}^{d}\sum_{k=1}^{c_{in}}f(X,W_c(i,j,k)), 
		\end{equation}
	\end{footnotesize}
	where \begin{footnotesize} $f(X,W_c(i,j,k))$ \end{footnotesize} can be divided into three cases:
	\begin{itemize}
		\item if \begin{footnotesize} $W_{c}(i,j,k)>r_x$ \end{footnotesize}, then \begin{footnotesize} $f(X,W_c(i,j,k))=(r_x - X(m+i,n+j,k))+(W_{c}(i,j,k)-r_x).$ \end{footnotesize}
		\item if \begin{footnotesize} $-r_x \le W_{c}(i,j,k) \le r_x$ \end{footnotesize}, then \begin{footnotesize} $f(X,W_c(i,j,k))=|X(m+i,n+j,k) - W_{c}(i,j,k)|.$ \end{footnotesize}
		\item if \begin{footnotesize} $W_{c}(i,j,k)<-r_x$ \end{footnotesize}, then \begin{footnotesize} $f(X,W_c(i,j,k))=(X(m+i,n+j,k) + r_x)+(-W_{c}(i,j,k)-r_x).$\end{footnotesize}
	\end{itemize}
	The above three cases can be unified into:
	\begin{footnotesize}
		\begin{equation}
		X\oplus W_c=X\oplus \textit{clamp}(W_c, -r_x,r_x) -\sum_{i=1}^{d}\sum_{j=1}^{d}\sum_{k=1}^{c_{in}} \max(|W_c[i,j,k]|-r_x,0).
		\end{equation}
	\end{footnotesize}
	\vspace{-0.5em}
\end{proof}

After the lossless range clamp process for weights, $r_w$ is equal to $r_x$. Thus, we can directly adopt $r_x$ to quantize the weights and activations.

\noindent\textbf{Outliers Clamp for Activations. } As shown in the left of Figure~\ref{fig:distribution}, some activations clearly have outliers. To avoid the negative impact of these outliers during quantization, we propose a simple yet effective outliers clamp strategy for activations. Specifically, the absolute activations are denoted as $\mathbb{X}=\{|x_{0}|,...,|x_{n-1}|\}$, and the sorted $\mathbb{X}$ in ascending order is denoted as $\mathbb{\widetilde{X}}$, where $n$ is the number of the values. We select the value  $r_x=\mathbb{\widetilde{X}}[\lfloor\alpha*(n-1)\rceil]$ as the range of activations for the calculation of scale, where $\alpha \in (0,1]$ is a hyper-parameter controlling the ratio of discarded activations. Similarly, there is another option, \ie $r_x=\mathbb{\widetilde{X}}[n-1]*\alpha$. However, this method can only reduce the negative impact of outliers, whereas ours can eliminate the negative impact of outliers.

\section{Experiments}
In this section, we conduct extensive experiments. Thorough ablation studies are also provided. 

\subsection{Experiments on CIFAR} \label{exp_cifar}
\noindent\textbf{Re-train FP Networks. } Following the pioneering work of AdderNet~\cite{chen2020addernet}, we first conduct experiments on CIFAR-10 and CIFAR-100~\cite{krizhevsky2009learning}. The CIFAR-10/100 dataset consists of 60,000 color images in 10/100 classes with $32\times32$ size, including 50,000 training and 10,000 validation images. The training images are padded by 4 pixels and then randomly cropped and flipped. We re-train the three relevant full-precision networks, including VGG-Small~\cite{cai2017deep}, ResNet-20~\cite{he2016deep} and ResNet-32~\cite{he2016deep}, according to the experimental settings in AdderNet~\cite{chen2020addernet}. Specifically, the first and last layers of the network are kept as conventional convolution, and the other layers are replaced with adder convolution. We employ learning rate starting at 0.1 and decay the learning rate with cosine learning rate schedule. SGD with momentum of 0.9 is adopted as our optimization algorithm. The weight decay is set to $5\times 10^{-4}$. We train the models for 400 epochs with a batch size of 256. The final results are provided in in the supplementary material, and are basically consistent with the results in ~\cite{chen2020addernet}. The baseline results of CNN and binary neural network (BNN)~\cite{zhou2016dorefa} are cited from  ~\cite{chen2020addernet}.

\noindent\textbf{Low-bit Quantization. } Following the general setting in quantization~\cite{rastegari2016xnor, zhang2018lq, lee2021network, liu2020reactnet,lin2020rotated}, we do not quantize the first and the last layers. The proposed quantization method can be employed in either post-training quantization (PTQ) or quantization-aware training (QAT). In practice, when the bits is 4, we begin to perform QAT procedure. Unless otherwise specified, the number of groups we use is 4. The hyper-parameter $\alpha$
controlling the ratio of discarded outliers in activations is empirically set to $0.999$ in CIFAR experiments. In the QAT procedure, we keep the same settings as the FP training, except that the learning rate is changed to $1\times10^{-4}$, and only 50 epochs are used for saving the training time. The straight-through estimator~\cite{bengio2013estimating} is adopted for avoiding the zero-gradient problem.

\begin{table*}[ht]
	\centering
	\small
	\caption{Quantization results of AdderNets on CIFAR-10 and CIFAR-100 datasets.}
	\label{tab1}
	\begin{tabular}{|p{2.1cm}<{\centering}|p{0.8cm}<{\centering}|p{2.1cm}<{\centering}| p{1.9cm}<{\centering} | p{2.1cm}<{\centering} | p{1.9cm}<{\centering}|}
		\hline
		\multirow{2}*{Model} & \multirow{2}*{Bits} & \multicolumn{2}{c|}{CIFAR-10 (\%)} & \multicolumn{2}{c|}{CIFAR-100 (\%)} \\
		\cline{3-6}
		~ &  ~ & PTQ &  QAT & PTQ & QAT \\
		\hline
		\multirowcell{4}{VGG-Small} & 8 & 93.42 (-0.02) & - & 73.56 (-0.04) & - \\
		\cline{2-6}
		~ & 6 & 93.48 (+0.04) & - & 73.62 (+0.02) & - \\
		\cline{2-6}
		~ & 5 & 93.41 (-0.03) & - & 73.50 (-0.10) & - \\
		\cline{2-6}
		~ & 4 & 93.20 (-0.24) & 93.46 (+0.02) & 73.15 (-0.45) & 73.58 (-0.02) \\
		\hline
		\multirow{4}*{ResNet-20} & 8 & 91.40 (-0.02) & - & 67.58 (-0.01) & - \\
		\cline{2-6}
		~ & 6 & 91.32 (-0.10) & - & 67.63 (+0.04) & - \\
		\cline{2-6}
		~ & 5 & 91.36 (-0.06) & - & 67.46 (-0.13) & - \\
		\cline{2-6}
		~ & 4 & 90.72 (-0.70) & 91.21 (-0.21) & 65.76 (-1.83) & 67.35 (-0.24) \\
		\hline
		\multirow{4}*{ResNet-32} & 8 & 92.76 (+0.04) & - & 70.46 (+0.29) & - \\
		\cline{2-6}
		~ & 6 & 92.62 (-0.10) & - & 70.08 (-0.09) & - \\
		\cline{2-6}
		~ & 5 & 92.59 (-0.13) & - & 70.09 (-0.08) & - \\
		\cline{2-6}
		~ & 4 & 91.73 (-0.99) & 92.35 (-0.37) & 68.24 (-1.93) & 69.38 (-0.79) \\
		\hline
	\end{tabular}
\end{table*}

The quantization results of AdderNets are reported in Table~\ref{tab1}. For all networks, there is almost no accuracy loss in 8-bit quantization compared to the full-precision counterparts. For example, the accuracy of 8-bit quantized ResNet-32 is even 0.29\% higher. For the 6-bit and 5-bit quantization, the accuracy loss exists, but is still within an acceptable range. For the more challenging 4-bit quantization, the accuracy loss of PTQ increases significantly. Therefore, QAT procedure is necessary in this case. After 50 epochs training, the accuracy loss was greatly reduced. For example, in the case of 4-bit ResNet-20 on CIFAR-100, the accuracy loss is reduced from 1.83\% to 0.24\%. If equipped with more refined QAT techniques, such as EWGS~\cite{lee2021network}, KD~\cite{liu2020reactnet} or simply a longer training epoch, the accuracy loss can be further reduced, but it is not the focus of this paper.

\subsection{Experiments on ImageNet} \label{exp_imagenet}
\noindent\textbf{Re-train FP Networks. } We further conduct evaluation on ImageNet dataset~\cite{deng2009imagenet}, which contains over $1.2M$ training images and $50K$ validation images belonging to 1,000 classes. The pre-processing and data augmentation follow the same protocols as in ~\cite{he2016deep}. We re-train two full-precision networks including ResNet-18 and ResNet-50~\cite{he2016deep} according to the experimental settings in ~\cite{chen2020addernet}. The entire training takes 150 epochs with a weight decay of $1\times10^{-4}$ and the other hyper-parameters is the same as that in CIFAR experiments. The final results are provided in the supplementary material and the baseline results of CNN and BNN are also cited from ~\cite{chen2020addernet}.

\noindent\textbf{Low-bit Quantization. } We adopt the same quantization settings discussed in CIFAR experiments except that the hyper-parameter $\alpha$ is set to $0.9992$. Besides, in the QAT procedure, we also keep the same settings as the full-training training, except that the learning rate is changed to $1\times10^{-3}$, and only 20 epochs are used for saving training time.
\begin{table*}[h]
	\centering
	\small
	\caption{Quantization results of AdderNets on ImageNet.}
	\label{tab3}
	\begin{tabular}{|p{1.6cm}<{\centering}|p{0.8cm}<{\centering}|p{2.2cm}<{\centering}| p{2.0cm}<{\centering} | p{2.2cm}<{\centering} | p{2.0cm}<{\centering}|}
		\hline
		\multirow{2}*{Model} & \multirow{2}*{Bits} & \multicolumn{2}{c|}{Top-1 Acc (\%)} & \multicolumn{2}{c|}{Top-5 Acc (\%)} \\
		\cline{3-6}
		~ &  ~ & PTQ &  QAT & PTQ & QAT \\
		\hline
		\multirow{4}*{ResNet-18} & 8 & 67.7 (-0.2) & - & 87.7 (-0.1) & - \\
		\cline{2-6}
		~ & 6 & 67.7 (-0.2) & - & 87.6 (-0.2) & - \\
		\cline{2-6}
		~ & 5 & 67.4 (-0.5) & - & 87.3 (-0.5) & - \\
		\cline{2-6}
		~ & 4 & 66.5 (-1.4) & 67.4 (-0.5) & 86.6 (-1.2) & 87.4 (-0.4) \\
		\hline
		\multirow{4}*{ResNet-50} & 8 & 75.0 (-0.0) & - & 91.9 (-0.0) & - \\
		\cline{2-6}
		~ & 6 & 75.0 (-0.0) & - & 91.8 (-0.1) & - \\
		\cline{2-6}
		~ & 5 & 74.7 (-0.3) & - & 91.5 (-0.4) & - \\
		\cline{2-6}
		~ & 4 & 73.2 (-1.8) & 73.7 (-1.3) & 90.7 (-1.2) & 90.9 (-1.0) \\
		\hline
	\end{tabular}
\vspace{-0.6em}
\end{table*}

The quantization results are reported in Table~\ref{tab3}. For both networks, there is almost no loss in 8-bit and 6-bit quantization compared to the full-precision counterparts. For the 5-bit quantization, the accuracy loss is still within an acceptable range, \ie a maximum loss of 0.5\%. Similar to CIFAR experiments, QAT is adopted in 4-bit quantization. After 20 epochs training, the accuracy loss was greatly reduced. For example, the accuracy loss of 4-bit ResNet-18 is reduced from 1.4\% to 0.5\%.

\subsection{Comparisons with State-of-the-arts}
\noindent\textbf{Image Classification. }
We compare the proposed method with the existing AdderNet quantization technique QSSF~\cite{wang2021addernet} on image classification task. We first compare the PTQ results on ImageNet with it. The detailed Top-1 accuracy loss comparison with full-precision Adder ResNet-18 are reported in Figure~\ref{fig4} (a). QSSF~\cite{wang2021addernet} simply adopt only one shared scale to quantize both the weights and activations simultaneously, which ignores the properties of the distribution of the weights and activations of AdderNet, leading to the problems of "Over clamp" and "Bits waste", further leading to a poor accuracy. In contrast, we propose a quantization method consisting of three parts: clustering-based weights grouping, lossless range clamp for weights and outliers clamp for activations. The problems of "Over clamp" and "Bits waste" can be effectively resolved with our quantization method, further leading to a high quantized accuracy. As the number of bits decreases, the advantages of our method become more pronounced. For example, in the case of 4-bit, our method achieves 8.5\% higher accuracy than QSSF~\cite{wang2021addernet}.
\begin{figure}[h]
	\centering
    \includegraphics[width=1.0\linewidth]{./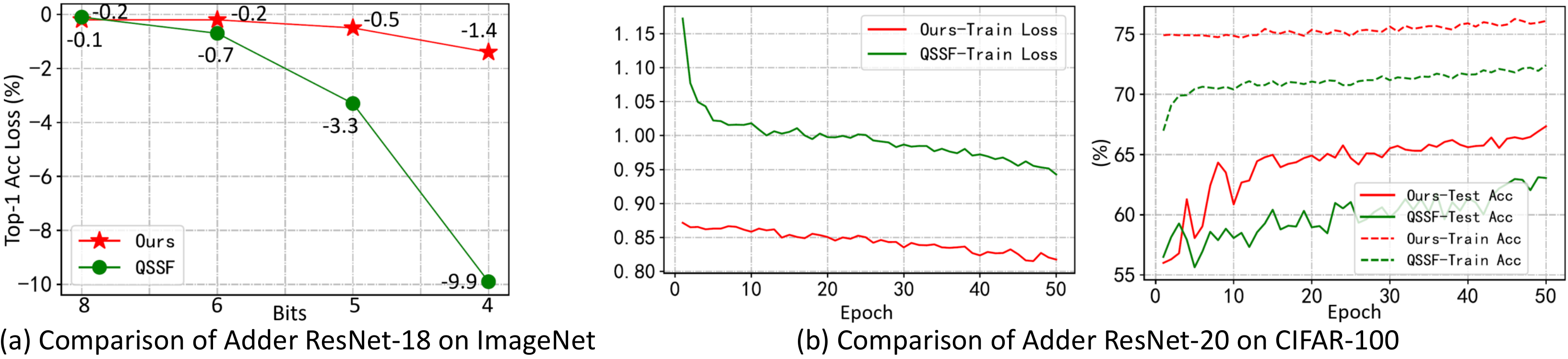}
	\vspace{-1em}
	\caption{Comparisons with QSSF~\cite{wang2021addernet} on image classification task.}
	\label{fig4}
	\vspace{-0.5em}
\end{figure}

In addition, we also compare the QAT results on CIFAR-100 with it. In Figure~\ref{fig4} (b), we visualize the training loss, training accuracy and testing accuracy of the 4-bit Adder ResNet-20. For the accuracy curve, the solid lines and dash lines represent the testing accuracy and training accuracy, respectively. Our quantization method achieves a higher accuracy on both training and testing. Specifically, under the same epoch, our method achieves 67.35\% test accuracy, which is higher than 64.61\% achieved by QSSF~\cite{wang2021addernet}. 

\noindent\textbf{Object Detection. } 
AdderDet~\cite{chen2021empirical} presents an empirical study of AdderNet on object detection task. Besides, the PTQ mAP with one shared scale of 8-bit Adder FCOS~\cite{tian2019fcos} is also reported in this work, \ie 36.2 on COCO val2017~\cite{lin2014microsoft}, which is 0.8 lower than the full-precision counterpart. To verify the generality of our quantization method on detection task, we conduct the PTQ experiment of the 8-bit Adder FCOS following the settings in AdderDet~\cite{chen2021empirical}. Specifically, the adder convolutions in the backbone and neck are quantized with 4-group shared scales, and the hyper-parameter $\alpha$ is set to $0.9992$. The final 8-bit quantized mAP with our method is 36.5, which is 0.3 higher. 

\begin{wrapfigure}{r}{6cm}
	\centering
	\includegraphics[width=1.0\linewidth]{./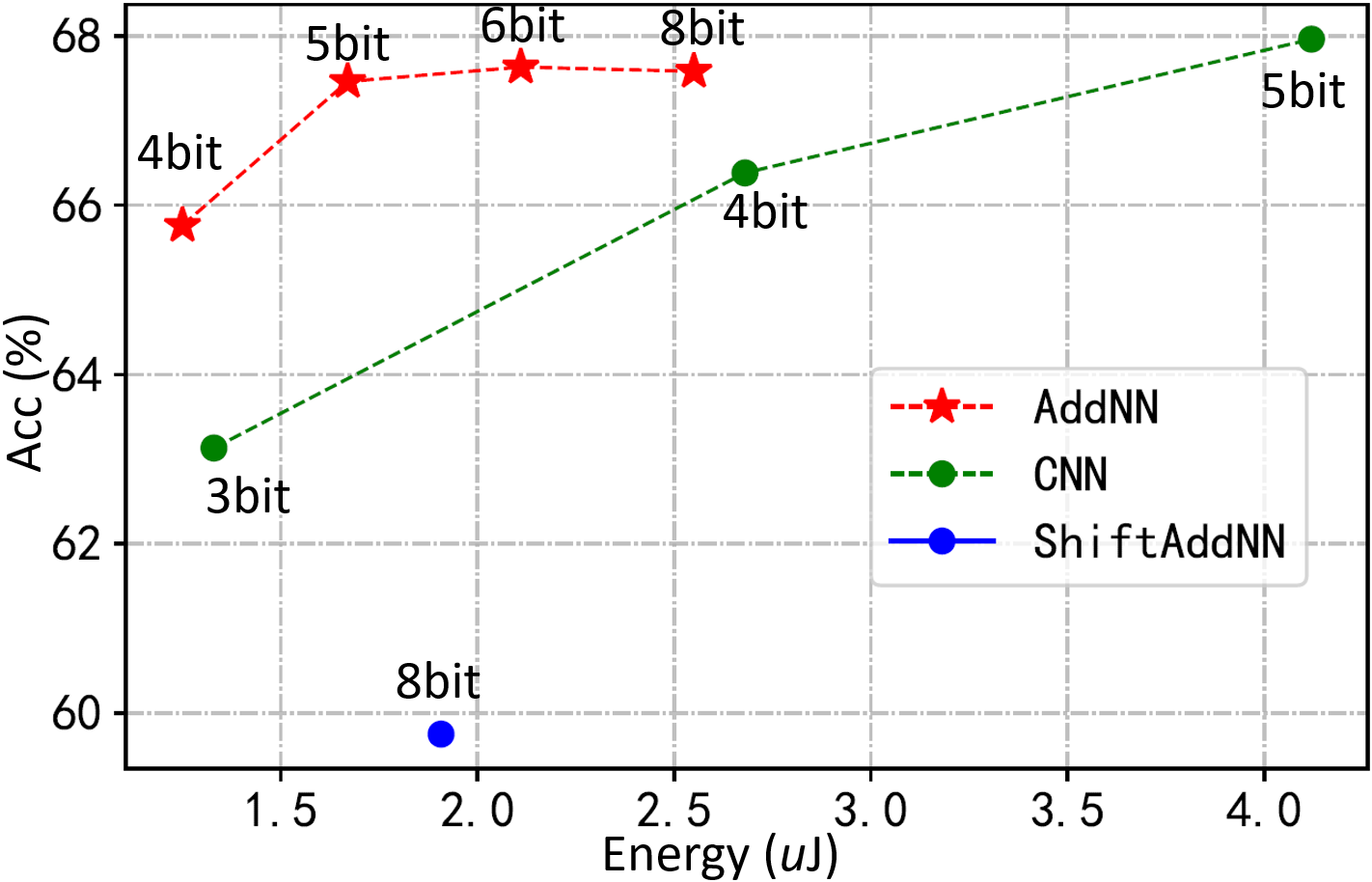}
	\vspace{-1.2em}
	\caption{Comparisons of accuracy and energy cost of ResNet-20 on CIFAR-100. }
	\label{fig3}
	\vspace{-1.0em}
\end{wrapfigure}

\noindent\textbf{Energy Cost. }
In Figure~\ref{fig3}, we compare the accuracy and energy cost of different network architectures after quantization, including AdderNet, CNN and ShiftAddNet~\cite{you2020shiftaddnet}. The experiments are based on ResNet-20 network and CIFAR-100 dataset. We follow the related works~\cite{you2020shiftaddnet, wang2021addernet,li2021winograd} to measure the practical energy cost of ResNet-20 with various bits and basic operations on a FPGA platform. Besides, the quantized results of CNN is achieved by the BRECQ~\cite{li2021brecq} PTQ method. The quantized accuracy of ShiftAddNet is cited from the original paper~\cite{you2020shiftaddnet} and the energy of ShiftAddNet is calculated from the relative ratio of the energy of ShiftAddNet and AdderNet in the original paper~\cite{you2020shiftaddnet}, \ie -25.2\%. From Figure~\ref{fig3}, under similar energy cost, the quantized Adder ResNet-20 with our method achieves a higher accuracy.

\subsection{Ablation Studies}
In this subsection, we conduct thorough ablation studies based on the 4-bit quantized Adder ResNet-20 network on CIFAR-100 dataset.

\noindent\textbf{Analysis on Sub-Methods. } We first conduct analysis on three sub-methods as reported in Table~\ref{tab5}, including group-shared scales, lossless range clamp for weights and outliers clamp for activations. The complete method achieves the highest accuracy of 67.35\%, which is higher than the initial accuracy of 57.88\%. Besides, the introduction of any sub-method contributes to the improvement of accuracy.
\begin{table}[h]
	\vspace{-1.0em}
	\centering
	\small
	\caption{Analysis on sub-methods.}
	\label{tab5}
	\begin{tabular}{|p{3.0cm}<{\centering}| p{3.0cm}<{\centering} | p{3.0cm}<{\centering} | p{1.5cm}<{\centering}|}
		\hline
		\multicolumn{3}{|c|}{Method} & \multirow{3}*{Acc (\%)}\\
		\cline{1-3}
		Group-Shared Scales &  Weights Clamp & Activations Clamp& ~ \\
		\hline
		~ & ~ & ~ & 57.88 \\
		\hline
		\checkmark & ~ & ~ & 60.42 \\
		\hline
		~ & \checkmark & ~ & 62.30 \\
		\hline
		~ & ~ & \checkmark & 61.55 \\
		\hline
		\checkmark & \checkmark & ~ & 65.17 \\
		\hline
		\checkmark & \checkmark & \checkmark & 67.35 \\
		\hline
	\end{tabular}
\end{table}

\noindent\textbf{Analysis on Group Number. } 
The results with various group number are reported in Table~\ref{tab6}, and the relative FLOPs are calculated under the assumption that the number of channels of input and output is 32. From Table~\ref{tab6}, when the group number is 4, the quantized model achieves the highest accuracy with a negligible increase in FLOPs. More group does not lead to a higher accuracy.

\noindent\textbf{Analysis on Group Methods. } 
We also conduct analysis on the group methods. The group number we used is 4, and the results are reported in Table~\ref{tab7}. Uniform indicates that the weights are evenly grouped based on the output channel without clustering. Compared with other group methods, the clustering method based on $max(|W|)$ achieves the highest accuracy.
\begin{table}[h]
	\begin{minipage}{0.5\linewidth}	
	\centering
    \small
	\caption{Analysis on the group number.}
	\label{tab6}
	\begin{tabular}{|p{1.8cm}<{\centering}|p{2.3cm}<{\centering}|p{1.8cm}<{\centering}|}
		\hline
		Group & Relative FLOPs & Acc (\%) \\
		\hline
		1 &1 & 65.91 \\
		\hline
		2 & 1.002  & 66.38 \\
		\hline
		4 & 1.005 & 67.35 \\
		\hline
		8 & 1.012 & 67.11 \\
		\hline
	\end{tabular}
		\label{tab-path}%
	\end{minipage}
	\hspace{5mm}
	\begin{minipage}{0.5\linewidth}  
	\centering
	\small
	\caption{Analysis on the group methods.}
	\label{tab7}
	\begin{tabular}{|p{2.3cm}<{\centering}|p{1.8cm}<{\centering}|}
		\hline
		Group Method & Acc (\%) \\
		\hline
		Uniform & 65.72 \\
		\hline
		All & 66.61 \\
		\hline
		Mean& 66.86 \\
		\hline
		Max & 67.35 \\
		\hline
	\end{tabular}
	\end{minipage}
\end{table}

\section{Conclusion}
To further improve the efficiency of AdderNet, this paper studies the quantization algorithm for AdderNet. Considering that the weights of different channels may differ greatly, and the ranges of weights and activations are different, we propose a scheme of group-shared scales and a clamp strategy for weights and activations to improve the performance of quantized AdderNet. The effectiveness of the proposed quantization method for AdderNet is well-verified on several benchmarks. We achieve state-of-the-art results on various quantized network architectures and datasets. We believe that the advantages of low-bit quantized AdderNet can shed light into the design of low-power hardware for AI applications. 

\section*{Acknowledgement}
We gratefully acknowledge the support of MindSpore, CANN(Compute Architecture for Neural Networks) and Ascend AI Processor used for this research. This research is supported by NSFC (62072449, 61972271); National Key R\&D Program of China (2020YFC2004100); University of Macau Grant (MYRG2019-00006-FST).

{\small
	\bibliographystyle{plain}
	\bibliography{ref}
}

\newpage
\appendix
\section{Appendix}

\subsection{Proof of Proposition 2}
\begin{proof}
	For a weight $W\in\mathbb R^{d\times d\times c_{in}\times c_{out}}$ and an input activation $X\in\mathbb R^{h\times w\times c_{in}}$, the FLOPs of $\bar{X} \oplus \bar{W}$ is $2hw(c_{in}d^{2}+1)c_{out}$. In addition, the FLOPs required to calculate $\bar{w}$ and $\bar{x}$ are $c_{in}d^{2}c_{out}$ and $ghwc_{in}$, respectively. Besides, the size of the output activation can be calculated by:
	\begin{equation}
	\begin{aligned}
	h_{out} = \lfloor(h-d+2*padding)/stride + 1\rfloor, w_{out} = \lfloor(w-d+2*padding)/stride + 1\rfloor.
	\label{eq:quant8}
	\end{aligned}
	\end{equation}
	Without loss of generality, assume that the values of $padding$ and $stride$ are both 1, therefore, the FLOPs required to de-quantize the output activation is $h_{out}w_{out}c_{out} = (h-d+3)(w-d+3)c_{out}$. 
	Finally, all FLOPs required for a layer quantization is:
	\begin{equation}
	FLOPs_{all}=2hw(c_{in}d^{2}+1)c_{out}+c_{in}d^{2}c_{out}+ghwc_{in}+(h-d+3)(w-d+3)c_{out}.
	\label{eq:quant9}
	\end{equation}
	Without loss of generality, assume that $d=3$, $c_{in}=c_{out}=c$, and $h=w=k$, then
	\begin{equation}
	FLOPs_{all}=18k^{2}c^{2}+gk^{2}c+9c^{2}+3k^{2}c.
	\label{eq:quant10}
	\end{equation}
	Compared with only one scale ($g=1$), FLOPs are increased by $r$ when adopting multiple scales ($g\ge2$):
	\begin{equation}
	r = \frac{(g-1)k^{2}c}{18k^{2}c^{2}+k^{2}c+9c^{2}+3k^{2}c} =\frac{(g-1)k^{2}}{(18k^{2}+9)c+4k^{2}} \approx \frac{g-1}{18c+4},
	\label{eq:quant11}
	\end{equation}
	As we discussed in the section of experiments, the value of $g$ that we adopt is 4. In this case, Eq.~\ref{eq:quant11} can be further simplified to 
	\begin{equation}
	r \approx \frac{g-1}{18c+4}=\frac{3}{18c+4}\approx\frac{1}{6c+1}.
	\end{equation}
	Considering that the magnitude of $c$ is generally in the tens or hundreds of common neural networks, thus the value of $r$ is very small. Therefore, the increase in FLOPs brought by the scheme of group-shared scales is negligible.
\end{proof}

\subsection{Full-precision Results}
In the section of experiments, we re-trained multiple full-precision adder networks on various datasets. The full-precision results on CIFAR-10 and CIFAR-100 are reported in Table~\ref{tab0}, and the full-precision results on ImageNet are reported in Table~\ref{tab2}, both denoted by AddNN. The results of AddNN are basically consistent with the results in~\cite{chen2020addernet}. The baseline results of convolutional neural network (CNN) and binary neural network (BNN) are cited from~\cite{chen2020addernet}.
\begin{table*}[t]
	\small
	\centering
	\caption{Full-precision results on CIFAR-10 and CIFAR-100 datasets.}
	\label{tab0}
	\begin{tabular}{|p{1.6cm}<{\centering}|p{1.2cm}<{\centering}|p{1.2cm}<{\centering}|p{1.2cm}<{\centering}|p{1.4cm}<{\centering}| p{2.0cm}<{\centering} | p{2.0cm}<{\centering}|}
		\hline
		Model & Method & \# Mul. & \# Add. & \# XNOR. & CIFAR-10 (\%) & CIFAR-100 (\%)\\
		\hline
		\multirow{3}*{VGG-Small} & CNN & 0.65G & 0.65G & 0 & 93.80 & 72.73 \\
		\cline{2-7}
		~ & BNN & 0.05G & 0.65G & 0.60G & 89.80 & 67.24 \\
		\cline{2-7}
		~ & AddNN & 0.05G & 1.25G & 0 & 93.44 & 73.60 \\
		\hline
		\multirow{3}*{ResNet-20} & CNN & 41.17M & 41.17M & 0 & 92.25 & 68.14 \\
		\cline{2-7}
		~ & BNN & 0.45M & 41.17M & 40.72M & 84.87 & 54.14 \\
		\cline{2-7}
		~ & AddNN & 0.45M & 81.89M & 0 & 91.42 & 67.59\\
		\hline
		\multirow{3}*{ResNet-32} & CNN & 69.12M & 69.12M & 0 & 93.29 & 69.74 \\
		\cline{2-7}
		~ & BNN & 0.45M & 69.12M & 68.67M & 86.74 & 56.21 \\
		\cline{2-7}
		~ & AddNN & 0.45M & 137.79M & 0 & 92.72 & 70.17\\
		\hline
	\end{tabular}
\end{table*}

\begin{table*}[t]
	\centering
	\small
	\caption{Full-precision results on ImageNet.}
	\label{tab2}
	\begin{tabular}{|p{1.6cm}<{\centering}|p{1.2cm}<{\centering}|p{1.2cm}<{\centering}|p{1.2cm}<{\centering}|p{1.4cm}<{\centering}| p{2.0cm}<{\centering} | p{2.0cm}<{\centering}|}
		\hline
		Model & Method & \# Mul. & \# Add. & \# XNOR. & Top-1 Acc (\%)  & Top-5 Acc (\%)\\
		\hline
		\multirow{3}*{ResNet-18} & CNN & 1.8G & 1.8G & 0 & 69.8 & 89.1 \\
		\cline{2-7}
		~ & BNN & 0.1G & 1.8G & 1.7G & 51.2 & 73.2 \\
		\cline{2-7}
		~ & AddNN & 0.1G & 3.5G & 0 & 67.9 & 87.8 \\
		\hline
		\multirow{3}*{ResNet-50} & CNN & 3.9G & 3.9G & 0 & 76.2 & 92.9 \\
		\cline{2-7}
		~ & BNN & 0.1G & 3.9G & 3.8G & 55.8 & 78.4 \\
		\cline{2-7}
		~ & AddNN & 0.1G & 7.6G & 0 & 75.0 & 91.9\\
		\hline
	\end{tabular}
\end{table*}

\subsection{Analysis on the Ratio of Discarded Outliers}
As we discussed in the subsection of outliers clamp for activations, the value  $r_x=\mathbb{\widetilde{X}}[\lfloor\alpha*(n-1)\rceil]$ is selected as the range of activations for the calculation of scale, where $\alpha \in (0,1]$ is a hyper-parameter controlling the ratio of discarded outliers in activations. We supplement the ablation study of this ratio with 4-bit quantized adder ResNet-20 network on CIFAR-100 dataset. 
\begin{table}[h]
	\centering
	\small
	\caption{Analysis on the ratio of discarded outliers in activations.}
	\label{tab3}
	\begin{tabular}{|p{1.5cm}<{\centering}|p{1.5cm}<{\centering}|p{1.5cm}<{\centering}|p{1.5cm}<{\centering}|p{1.5cm}<{\centering}|}
		\hline
		$\alpha$ & 0.9985  & 0.9990& 0.9995&1.0\\
		\hline
		Acc (\%) &67.29  & 67.35  & 67.11&65.17 \\
		\hline
	\end{tabular}
\end{table}

As shown in Table~\ref{tab3} , $\alpha=1$ means that the scheme of outliers clamp for activations is not adopted, resulting in a significantly degraded quantized accuracy. The quantized accuracy can be improved with an appropriate $\alpha$.

\subsection{Quantization Results on Adder Vision Transformers}
We also try the proposed quantization method on adder vision transformers~\cite{shu2021adder}. We re-train the full-precision adder DeiT-T for 400 epochs from scratch on ImageNet dataset following~\cite{shu2021adder}, and the final top-1 accuracy of the full-precision adder DeiT-T is 68.3\%. For the next quantization step, the number of groups we use is 4, the hyper-parameter $\alpha$ controlling the ratio of discarded outliers in activations is set to 0.9992. The accuracy drops after post-training quantization are reported in Table~\ref{tab4}. The advantage of the proposed method over QSSFF~\cite{wang2021addernet} is significant. For example, at the case of W4A4, the accuracy drop of our method is 8.7\%, which is much lower than the 16.3\% of QSSF~\cite{wang2021addernet}.
\begin{table}[h]
	\centering
	\small
	\caption{Accuracy drops under various bits.}
	\label{tab4}
	\begin{tabular}{|p{1.8cm}<{\centering}|p{1.5cm}<{\centering}|p{1.5cm}<{\centering}|p{1.5cm}<{\centering}|}
		\hline
		~ & W8A8(\%)  & W6A6 (\%)& W4A4 (\%)\\
		\hline
		Ours &-0.5  & -4.1  &-8.7 \\
		\hline
		QSSF~\cite{wang2021addernet}&-1.7  & -6.5  &-16.3 \\
		\hline
	\end{tabular}
\end{table}

\subsection{Quantization Results on Lower-bit}
We supplement the 3-bit PTQ quantization experiment of adder ResNet-20 on CIFAR-100 dataset. Besides, the comparisons with more CNN quantization methods are also supplemented. The detailed accuracy drops are reported in Table~\ref{tab5}.

\begin{table}[t]
	\centering
	\small
	\caption{Comparisons with more CNN quantization methods.}
	\label{tab5}
	\begin{tabular}{|p{2.8cm}<{\centering}|p{1.5cm}<{\centering}|p{1.5cm}<{\centering}|}
		\hline
		~ & W4A4 (\%)  & W3A3 (\%)\\
		\hline
		AddNN &-1.83  & -6.02\\
		\hline
		CNN AdaRound~\cite{nagel2020up}&-2.01  & -6.77 \\
		\hline
		CNN BRECQ~\cite{li2021brecq}&-1.74  & -5.95 \\
		\hline
		CNN QDROP~\cite{wei2022qdrop}&-1.70  & -5.86 \\
		\hline
	\end{tabular}
\end{table}

\subsection{Distribution of the Weights and Activations}
In Figure~\ref{fig}, we visualize the histogram of the weights and activations in AdderNet. The input full-precision (FP) activations and weights in pre-trained AdderNet show a significant difference, which pose a huge challenge for AdderNet quantization. Other AdderNet quantization methods~\cite{wang2021addernet, chen2021empirical} fail to deal with this challenge, leading to the phenomenon of over clamp and bits waste, further resulting in a poor quantized accuracy. In contrast, our quantization method can effectively address this challenge by the redistribution of full-precision weights and activations, resulting in a good quantized accuracy. One-shared scale is adopted here for the simplification of visualization, and symmetric 4-bit quantization is taken as an example.
\begin{figure}[t]
	\centering
	\includegraphics[width=1.0\linewidth]{./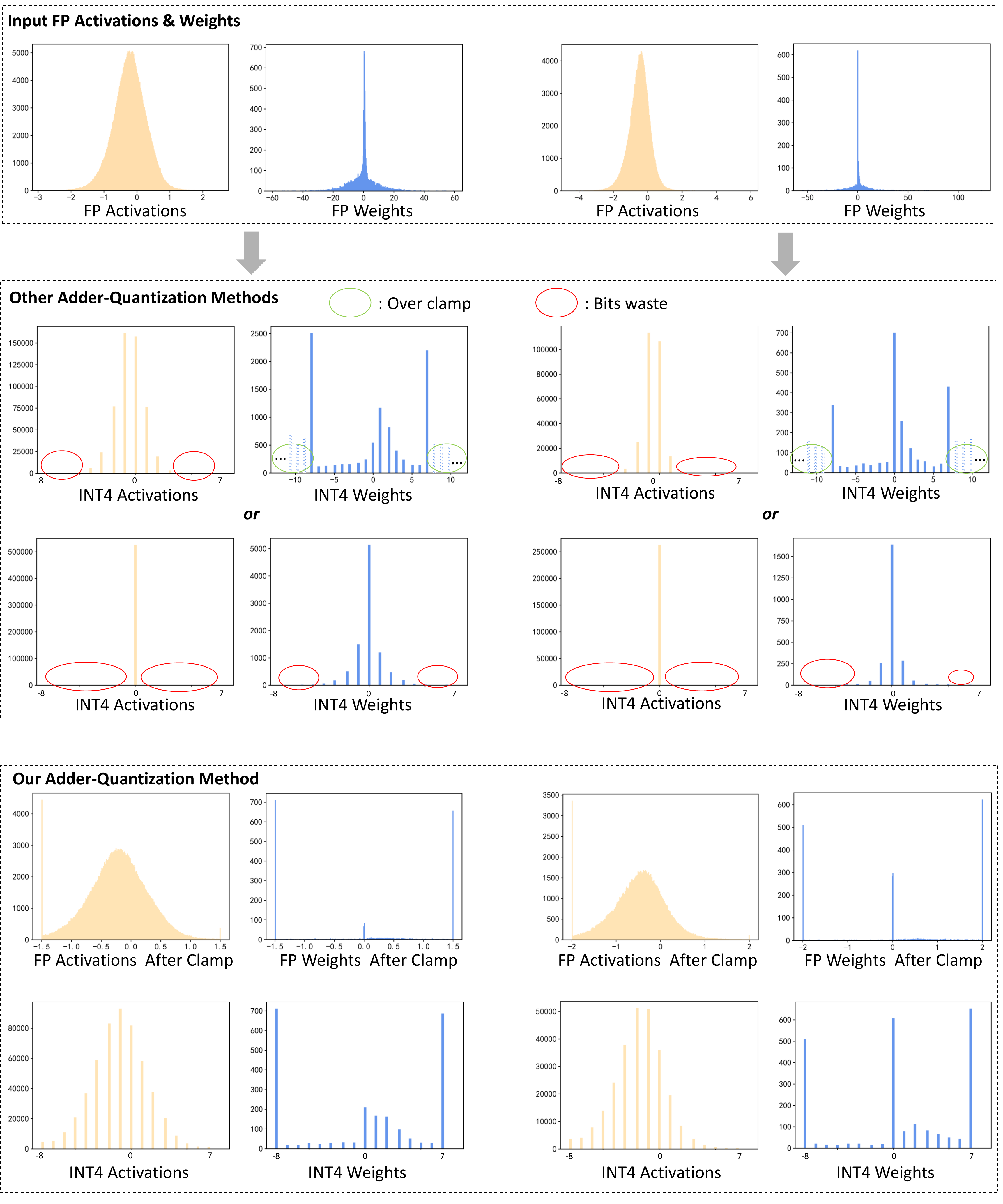}
	\caption{Distribution of the weights and activations in AdderNet.}
	\label{fig}
\end{figure}

\subsection{Limitations and Societal Impacts }
Our AdderNet quantization method has one major limitation: as the number of bits decreases, the accuracy loss of the quantization model will increase. Therefore, quantization-aware training is necessary for the low bits, which is time consuming and computationally consuming. 

As for the societal impacts, the proposed quantization method can further reduce the energy consumption of AdderNet with a lower quantized accuracy loss. The low power devices equipped with quantized AdderNet can be deployed to surveillance scenario. If used improperly, there may be a risk of information leakage.


\end{document}